\newcommand{\cA}{\mathcal{A}}
\newcommand{\cD}{\mathcal{D}}
\newcommand{\cS}{\mathcal{S}}
\newcommand{\cX}{\mathcal{X}}
\newcommand{\realset}{\mathbb{R}}
\newcommand{\vol}{\mathrm{vol}}
\newcommand{\condE}[2]{\mathbb{E} \left[#1 \,\middle|\, #2\right]}
\newcommand{\Erv}[2]{\mathbb{E}_{#1} \left[#2\right]}
\newcommand{\abs}[1]{\left|#1\right|}
\newcommand*\dif{\mathop{}\!\mathrm{d}}
\newcommand{\I}[1]{\mathds{1} \! \left\{#1\right\}}
\newcommand{\set}[1]{\left\{#1\right\}}
\newcommand{\T}{^\top}
\DeclareMathOperator*{\argmax}{arg\,max\,}
\mathchardef\mhyphen="2D
\newcommand{\ucb}{\ensuremath{\tt UCB1}\xspace}
\newcommand{\ehvi}{\ensuremath{\tt EHVI}\xspace}
\newcommand{\meanhvi}{\ensuremath{\tt meanHVI}\xspace}
\newcommand{\oracle}{\ensuremath{\tt Oracle}\xspace}
\newcommand{\pesshvi}{\ensuremath{\tt pessHVI}\xspace}
\newcommand{\random}{\ensuremath{\tt Random}\xspace}
\begin{document}

\twocolumn[

\aistatstitle{Pessimistic Off-Policy Multi-Objective Optimization}

\aistatsauthor{Shima Alizadeh \And Aniruddha Bhargava \And Karthick Gopalswamy}

\aistatsaddress{AWS AI Labs \And Amazon \And AWS AI Labs}

\aistatsauthor{Lalit Jain \And Branislav Kveton \And Ge Liu}

\aistatsaddress{Amazon Visiting Scholar \And AWS AI Labs \And AWS AI Labs}]

\begin{abstract}
Multi-objective optimization is a type of decision making problems where multiple conflicting objectives are optimized. We study offline optimization of multi-objective policies from data collected by an existing policy. We propose a pessimistic estimator for the multi-objective policy values that can be easily plugged into existing formulas for hypervolume computation and optimized. The estimator is based on inverse propensity scores (IPS), and improves upon a naive IPS estimator in both theory and experiments. Our analysis is general, and applies beyond our IPS estimators and methods for optimizing them. The pessimistic estimator can be optimized by policy gradients and performs well in all of our experiments.
\end{abstract}

\section{Introduction}
\label{sec:introduction}

\emph{Multi-objective optimization (MOO)} is an area of decision making where multiple conflicting objectives are optimized \citep{keeney93decisions,emmerich18tutorial}. Many real-world problems have multiple objectives, including in economics \citep{ponsich13survey}, engineering \citep{marler04finding}, product design and manufacturing \citep{wang11multiobjective}, and logistics \citep{xifeng13multiobjective}. Therefore, MOO has been applied widely and successfully. A typical setting is of a system designer that tries to trade off multiple objectives subject to their preferences. As an example, when designing a product, the form factor, cost, and failure rate must be carefully balanced.

MOO has been usually studied under the assumption that the objective function is known, with a focus on optimizing it. When it is not known, the problem of learning to optimize it online can be formulated as a \emph{contextual bandit} \citep{li10contextual,chu11contextual}, where the goal is to learn a \emph{policy} that takes the most rewarding \emph{action} in each \emph{context}. In many applications, policies cannot be learned online by bandit algorithms because exploration can significantly impact user experience. However, offline data collected by a previously deployed policy are often available. Offline, or \emph{off-policy}, optimization using such logged data is a practical way of learning policies without costly online interactions \citep{dudik14doubly,swaminathan15counterfactual}. In this work, we study offline optimization of multi-objective policies from logged data.

One motivating example for our work is the design of a movie recommendation policy at a movie streaming company. As a first step, a policy could be learned offline to maximize the click-through rate (CTR). After it is deployed online, the policy recommends too many recent movies, which was not intended. To avoid this bias, a recent movie penalty is added to the objective and a new policy is learned offline. After it is deployed online, the policy recommends mostly classic movies, which was again not intended. Hence it needs to be redesigned again. A company typically goes through many iterations like this until a policy with a good balance between recency, popularity, and relevance is learned. We propose an offline framework for policy optimization that avoids this.

We study off-policy MOO from logged data and make the following contributions. First, we formalize offline optimization of multi-objective policies as hypervolume maximization. Second, we propose a pessimistic IPS estimator for the values of multi-objective policies that can be easily plugged into existing formulas for hypervolume computation. Third, we analyze the error of the estimator when used in optimization, and show its benefits over a naive IPS estimator. Our analysis is general, and applies beyond our IPS estimators (\cref{sec:off-policy estimation}) and methods for optimizing them (\cref{sec:hypervolume optimization}). Finally, we show the benefit of pessimistic optimization empirically on all major multi-objective benchmarks: ZDT \citep{zitzler2000comparison}, DTLZ \citep{deb2005scalable}, and WFG \citep{huband05scalable}.

\section{Setting}
\label{sec:setting}

\begin{figure}[t]
  \centering
  \includegraphics[trim={1.9in 0.55in 1.9in 0.55in},clip,width=3in]{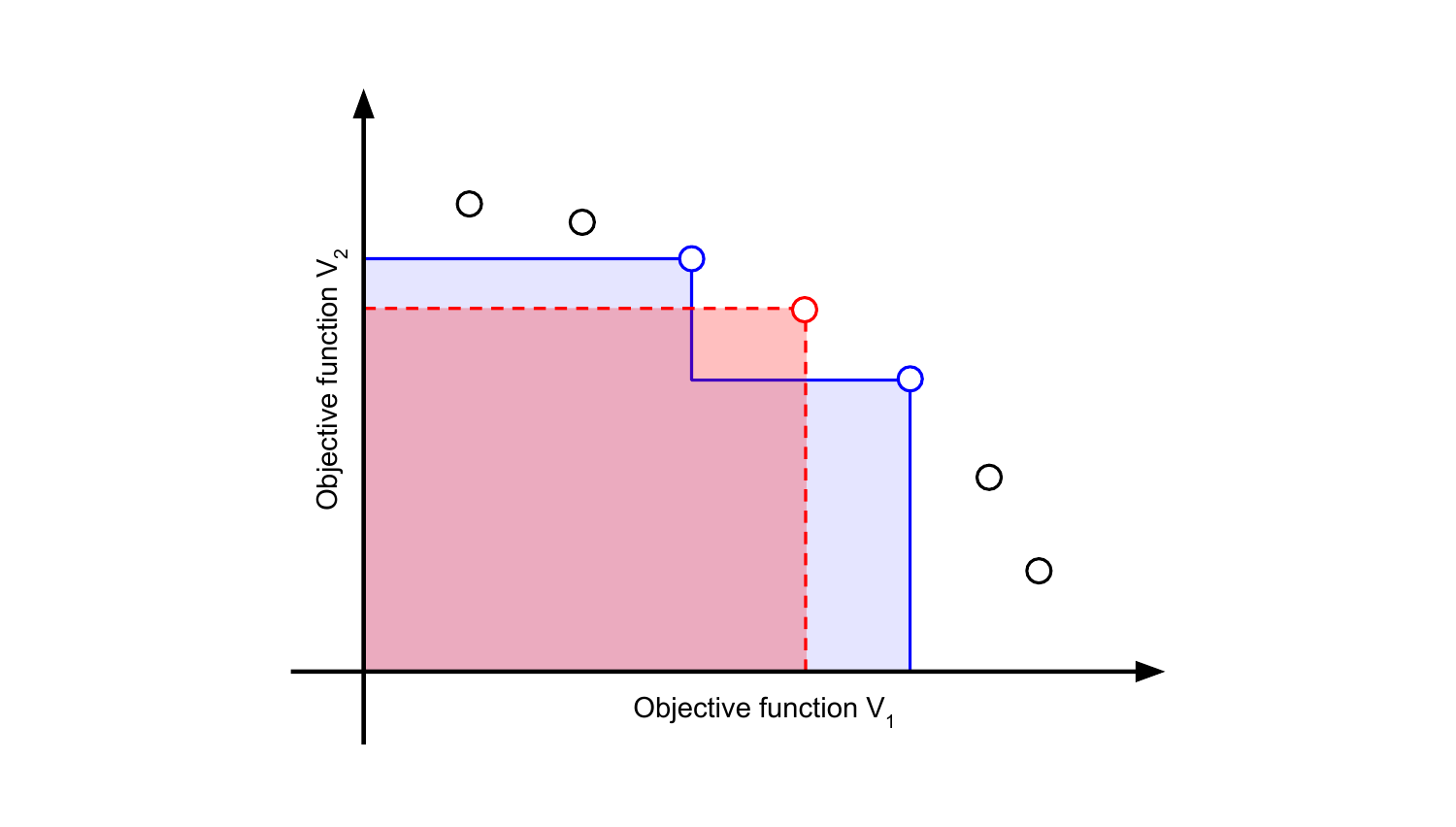}
  \caption{Each point is a value function $V(\pi)$ for one policy $\pi \in \Pi$. The red rectangle is the optimal hypervolume for $K = 1$. The union of the blue rectangles is the optimal hypervolume for $K = 2$.}
  \label{fig:illustration}
\end{figure}

We formally introduce the problem of policy optimization with a single objective in \cref{sec:single objective} and generalize it to multiple objectives in \cref{sec:multiple objectives}.

\subsection{Single-Objective Policy Optimization}
\label{sec:single objective}

We start with introducing our notation. Random variables are capitalized, except for Greek letters like $\theta$. For any positive integer $n$, we define $[n] = \set{1, \dots, n}$. The indicator function is $\I{\cdot}$. The $i$-th entry of vector $v$ is denoted by $v_i$. If the vector is already indexed, such as $v_j$, we write $v_{j, i}$.

In the classic contextual bandit \citep{li10contextual}, the agent observes a \emph{context} $x \in \cX$, where $\cX$ is a \emph{context set}; takes an \emph{action} $a \in \cA$, where $\cA$ is an \emph{action set}; and observes a \emph{stochastic reward} $Y \sim P(\cdot \mid x, a)$, where $P(\cdot \mid x, a)$ is the \emph{reward distribution} of action $a$ in context $x$. We denote the mean reward of action $a$ in context $x$ by $r(x, a) = \Erv{Y \sim P(\cdot \mid x, a)}{Y}$. A \emph{policy} $\pi$ maps actions to contexts, and we denote by $\pi(a \mid x)$ the probability of taking action $a$ in context $x$.

Let $(x_t)_{t = 1}^n$ be a sequence of $n$ contexts. The \emph{expected value of policy} $\pi$ in contexts $(x_t)_{t = 1}^n$ is
\begin{align}
  V(\pi)
  = \frac{1}{n} \sum_{t = 1}^n \sum_{a \in \cA} \pi(a \mid x_t) r(x_t, a)\,.
  \label{eq:policy value}
\end{align}
The optimal policy maximizes the expected value,
\begin{align}
  \textstyle
  \pi_*
  = \argmax_{\pi \in \Pi} V(\pi)\,,
  \label{eq:optimal policy}
\end{align}
where $\Pi$ is a class of optimized policies. If the policy class is sufficiently expressive, $\pi_*$ could take the action with the highest mean reward in each context.

\subsection{Multi-Objective Policy Optimization}
\label{sec:multiple objectives}

Now we extend the setting in \cref{sec:single objective} to multiple objectives. The main difference is that the stochastic reward $Y \sim P(\cdot \mid x, a)$ and its mean $r(x, a)$ are vectors of length $m$, where $m$ is the \emph{number of objectives}. We denote by $Y_i$ and $r_i(x, a)$ the corresponding rewards in objective $i \in [m]$. The expected value of policy $\pi$, $V(\pi)$ in \eqref{eq:policy value}, is also an $m$-dimensional vector; and we denote by $V_i(\pi)$ the value in objective $i$. We refer to $V(\pi)$ as a \emph{value function} since it maps policies to their values in multiple objectives. To simplify exposition, we assume that the stochastic rewards are bounded in $[0, 1]^m$. Thus $r(x, a) \in [0, 1]^m$ and $V(\pi) \in [0, 1]^m$.

Our motivating movie recommendation problem can be formulated in our setting as follows. The context set $\cX$ is the set of all users and the action set $\cA$ is the set of all movies that can be recommended. The user in interaction $t$, $x_t \in \cX$, is recommended movie $a \in \cA$ with probability $\pi(a \mid x_t)$. The mean reward could have two components, the click probability $r_1(x_t, a)$ and the purchase probability $r_2(x_t, a)$.

The main challenge in extending the optimization in \eqref{eq:optimal policy} to multiple objectives is that no policy may dominate others in all objectives. To address this problem, we adopt a standard approach in \emph{a-posteriori} MOO \citep{miettinen98nonlinear}: we cover the Pareto front of $V$ with diverse solutions $\pi \in \Pi$. Popular approaches for diversity maximization include random scalarization \citep{murata95moga}, Pareto dominance \citep{deb02fast}, and hypervolume maximization \citep{emmerich05emo}. We adopt the last approach. In our movie recommendation problem, the diverse set of policies would be learned and presented to a human decision maker, which would then select a policy that best fits their preferences.

We measure the diversity of policies by their \textit{hypervolume indicator}, a popular metric in multi-objective optimization \citep{emmerich05emo}. The \emph{hypervolume indicator} of policies $S \subseteq \Pi$ is defined as
\begin{align}
  \vol(S, V)
  & = \int_{y \in [0, 1]^m} \I{\bigvee_{\pi \in S} \{y \leq V(\pi)\}} \dif y
  \label{eq:hypervolume} \\
  & = \bigcup_{\pi \in S} \bigtimes_{i = 1}^m [0, V_i(\pi)]\,,
  \nonumber
\end{align}
where the inequality $y \leq V(\pi)$ is applied entry-wise. The first definition says that it is the fraction of points $y \in [0, 1]^m$ such that $y \leq V(\pi)$ holds for at least one $\pi \in S$. The second definition says that it is the hypervolume of a union of hyperrectangles corresponding to policies $\pi \in S$. To simplify terminology, we refer to \eqref{eq:hypervolume} as the \emph{hypervolume}.

Our goal is to identify $\hat{S} \subseteq \Pi$ such that $|\hat{S}| \leq K$ and $\vol(\hat{S}, V) \approx \vol(\Pi, V)$. Roughly speaking, $\hat{S}$ should be as diverse as $\Pi$, as measured by covering a similar space. Thus a natural generalization of \eqref{eq:optimal policy} is the set of $K$ policies that maximizes the hypervolume,
\begin{align}
  \textstyle
  S_*
  = \argmax_{S \subseteq \Pi: |S| = K} \vol(S, V)\,.
  \label{eq:optimal policy set}
\end{align}
We note that \eqref{eq:optimal policy set} reduces to \eqref{eq:optimal policy} when the number of objectives is $m = 1$. We illustrate solutions to \eqref{eq:optimal policy set} for $K \in \set{1, 2}$ in \cref{fig:illustration}.

In this work, we study a setting where the value function $V$, an input to $\vol(S, V)$, is estimated from logged data. We give estimators of $V$ in \cref{sec:off-policy estimation} and then analyze them in \cref{sec:analysis}. We discuss algorithms for solving \eqref{eq:optimal policy set} using the estimators in \cref{sec:hypervolume optimization}.

\section{Off-Policy Multi-Objective Estimation}
\label{sec:off-policy estimation}

The value function $V$ is unknown but we can estimate it from data collected by another policy. We formalize this problem as follows. Let $(x_t)_{t = 1}^n$ be the same sequence of contexts as in \eqref{eq:policy value} and $\pi_0$ be a data \emph{logging policy} $\pi_0$, which takes action $A_t \sim \pi_0(\cdot \mid x_t)$ in interaction $t \in [n]$. Let $Y_t = (Y_{t, i})_{i \in [m]}$ be the corresponding reward, generated as $Y_t \sim P(\cdot \mid x_t, A_t)$. The rewards are \emph{stochastic} and sampled independently, with means $\condE{Y_{t, i}}{x_t, A_t} = r_i(x_t, A_t)$ and $\sigma^2$-sub-Gaussian noise. The result of this process is a \emph{logged dataset} of size $n$, $\cD = \set{(x_t, A_t, Y_t)}_{t \in [n]}$, which we use to estimate $V$.

The rest of this section is organized as follows. We present an inverse propensity score estimator of the value function $V$ in \cref{sec:ips}. Our main contribution is its pessimistic variant in \cref{sec:pessimistic ips}. We focus on these estimators because they can be easily combined with differentiable policies \citep{swaminathan15counterfactual}. We discuss other possible choices in \cref{sec:other estimators}. We design estimators separately for each objective, which allows them to be plugged into existing hypervolume estimators. For instance, if $\hat{V}_i(\pi)$ is an estimate of $V_i(\pi)$, we only need to replace $V_i(\pi)$ in \eqref{eq:hypervolume} to obtain hypervolume under that estimate. While the per-objective design appeared before in \citet{wang22imo3}, we are the first to incorporate pessimism and confidence intervals.

\subsection{IPS Estimator}
\label{sec:ips}

One approach to off-policy optimization is to optimize the mean estimate $\hat{V}$ of $V$. The advantage is that the uncertainty of $\hat{V}$ does not have to be modeled. The most popular approach for estimating the mean value of a policy are \emph{inverse propensity scores (IPS)} \citep{horwitz52generalization}. Generally, the value of policy $\pi$ in objective $i$ can be estimated using a \emph{clipped IPS estimator} \citep{ionides08truncated,strehl10learning} as
\begin{align}
  \hat{V}_i(\pi, M)
  = \frac{1}{n} \sum_{t = 1}^n
  \min \set{\frac{\pi(A_t \mid x_t)}{\pi_0(A_t \mid x_t)}, M} Y_{t, i}\,,
  \label{eq:ips}
\end{align}
where $M \geq 0$ is a clipping parameter to control bias. The higher the value of $M$, the lower the bias and the higher the variance. When $M = \infty$, the estimator is unbiased but may suffer from a high variance. When $M = 0$, the estimator returns $0$ for any policy $\pi$. We define the \emph{IPS estimator} as $\hat{V}_i(\pi) = \hat{V}_i(\pi, \infty)$.

\subsection{Pessimistic IPS Estimator}
\label{sec:pessimistic ips}

Another approach to off-policy optimization is based on pessimism \citep{swaminathan15counterfactual,jin21is,hong23multitask}, where a \emph{lower confidence bound (LCB)} is optimized. The LCB can be derived using a high-probability confidence interval, which we derive below.

\begin{lemma}
\label{lem:confidence interval} Let $c_i(\pi) = \beta \sigma M_\pi / n$ for $\beta > 0$ and
\begin{align}
  M_\pi^2
  = \sum_{t = 1}^n M_{t, \pi}^2\,, \quad
  M_{t, \pi}
  = \max_{a \in \cA} \frac{\pi(a \mid x_t)}{\pi_0(a \mid x_t)}\,.
  \label{eq:maximum propensity score}
\end{align}
Then for any objective $i \in [m]$ and policy $\pi \in \Pi$, the bound $|\hat{V}_i(\pi) - V_i(\pi)| \leq c_i(\pi)$ holds with probability at least $1 - 2 \exp[- \beta^2 / 2]$.
\end{lemma}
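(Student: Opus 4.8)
The plan is to recognize $\hat V_i(\pi)$ as a normalized sum of independent terms, verify it is unbiased for $V_i(\pi)$, and then apply a sub-Gaussian tail bound whose variance proxy is controlled by the per-step maximum propensity scores $M_{t,\pi}$. Fix an objective $i$ and a policy $\pi$, and write $\hat V_i(\pi) = \frac1n \sum_{t=1}^n Z_t$ with $Z_t = \frac{\pi(A_t\mid x_t)}{\pi_0(A_t\mid x_t)} Y_{t,i}$. Since the contexts $(x_t)$ are fixed and the pairs $(A_t, Y_t)$ are drawn independently across $t$, the $Z_t$ are independent. The first step is unbiasedness: conditioning on $x_t$ and taking expectation over $A_t \sim \pi_0(\cdot \mid x_t)$ and the reward, the propensity ratio cancels $\pi_0$, giving $\condE{Z_t}{x_t} = \sum_a \pi(a \mid x_t) r_i(x_t, a)$, so $\E{\hat V_i(\pi)} = V_i(\pi)$ by \eqref{eq:policy value}. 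Hence $\hat V_i(\pi) - V_i(\pi) = \frac1n \sum_t (Z_t - \E{Z_t})$ is a centered sum of independent terms.

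Next I would bound the fluctuation scale of each $Z_t$. The key observation is that the realized propensity ratio never exceeds its per-step maximum, $\frac{\pi(A_t\mid x_t)}{\pi_0(A_t\mid x_t)} \le M_{t,\pi}$, so $Z_t$ is a $\sigma$-sub-Gaussian reward scaled by a factor at most $M_{t,\pi}$; this should make $Z_t - \E{Z_t}$ sub-Gaussian with variance proxy $\sigma^2 M_{t,\pi}^2$. By independence, $\sum_t (Z_t - \E{Z_t})$ is then sub-Gaussian with variance proxy $\sigma^2 \sum_t M_{t,\pi}^2 = \sigma^2 M_\pi^2$ by the definition in \eqref{eq:maximum propensity score}, and the standard sub-Gaussian tail bound gives $\prob{|\sum_t (Z_t - \E{Z_t})| \ge u} \le 2\exp(-u^2/(2\sigma^2 M_\pi^2))$.

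Finally I would set $u = \beta \sigma M_\pi$, which makes the exponent $-\beta^2/2$ and, after dividing the deviation by $n$, matches the radius $c_i(\pi) = \beta \sigma M_\pi / n$; this yields $|\hat V_i(\pi) - V_i(\pi)| \le c_i(\pi)$ with failure probability $2\exp(-\beta^2/2)$. No union bound is needed since the statement is for a single fixed pair $(i, \pi)$.

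I expect the delicate step to be the sub-Gaussian control of each $Z_t$, because $Z_t$ carries two coupled sources of randomness: the sampled action $A_t$, which fixes both the weight $\frac{\pi(A_t\mid x_t)}{\pi_0(A_t\mid x_t)}$ and the reward mean $r_i(x_t, A_t)$, and the reward noise around that mean. Handling this cleanly requires conditioning on $A_t$ to invoke the $\sigma^2$-sub-Gaussian noise assumption and then bounding the weight uniformly by $M_{t,\pi}$, while being careful that the variance contributed by the action-selection randomness — for which the bounded-reward assumption $Y_{t,i} \in [0,1]$ is the natural tool — is absorbed into the stated $\sigma^2 M_\pi^2$ proxy rather than inflating the constant.
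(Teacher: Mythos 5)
Your proposal follows essentially the same route as the paper's proof: write $\hat V_i(\pi)$ as an average of independent terms, note it is unbiased for $V_i(\pi)$, bound each term's scale by $M_{t,\pi}$ to get a sub-Gaussian variance proxy of $\sigma^2 M_\pi^2/n^2$, and apply the standard two-sided tail bound with $u = \beta\sigma M_\pi$. The ``delicate step'' you flag --- that the action-selection randomness in $\frac{\pi(A_t\mid x_t)}{\pi_0(A_t\mid x_t)} r_i(x_t,A_t)$ is a separate source of variance not obviously covered by the $\sigma^2$-sub-Gaussian reward-noise assumption --- is a genuine subtlety that the paper's own proof silently glosses over, so your treatment is, if anything, more careful than the original.
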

\begin{proof}
First, note that $\hat{V}_i(\pi)$ is a weighted sum of independent $\sigma^2$-sub-Gaussian rewards $Y_{t, i}$ and its mean is $V_i(\pi)$. Second, each reward $Y_{t, i}$ is scaled by at most $M_{t, \pi}$. Therefore, $\hat{V}_i(\pi)$ is sub-Gaussian with variance proxy $\sigma^2 M_\pi^2 / n^2$; and the stated claim follows from standard concentration bounds for sub-Gaussian random variables \citep{boucheron13concentration}.
\end{proof}

The corresponding lower confidence bound is
\begin{align}
  L_i(\pi)
  = \hat{V}_i(\pi) - c_i(\pi)
  \label{eq:pessimistic ips}
\end{align}
and we call it a \emph{pessimistic IPS estimator}. When $\beta = \sqrt{2 \log(2 / \delta)}$, the LCB holds with probability at least $1 - \delta$ for any objective $i \in [m]$ and policy $\pi \in \Pi$. We note that $L_i(\pi)$ can be negative. Therefore, it should be clipped from below by $0$ before plugging it into \eqref{eq:hypervolume} instead of $V_i(\pi)$.

One notable property of $c_i(\pi)$ is that it captures similarities of policies $\pi$ and $\pi_0$. More specifically, $c_i(\pi) = O(M_\pi)$, where $M_\pi$ in \eqref{eq:maximum propensity score} is a sum of maximum ratios between the probabilities of taking actions under policies $\pi$ and $\pi_0$. Therefore, $c_i(\pi)$ decreases as $\pi \to \pi_0$ and so does the uncertainty in the estimate of $V_i(\pi)$.

\section{Analysis}
\label{sec:analysis}

\begin{figure}[t]
  \centering
  \includegraphics[trim={1.9in 0.55in 1.9in 0.55in},clip,width=3in]{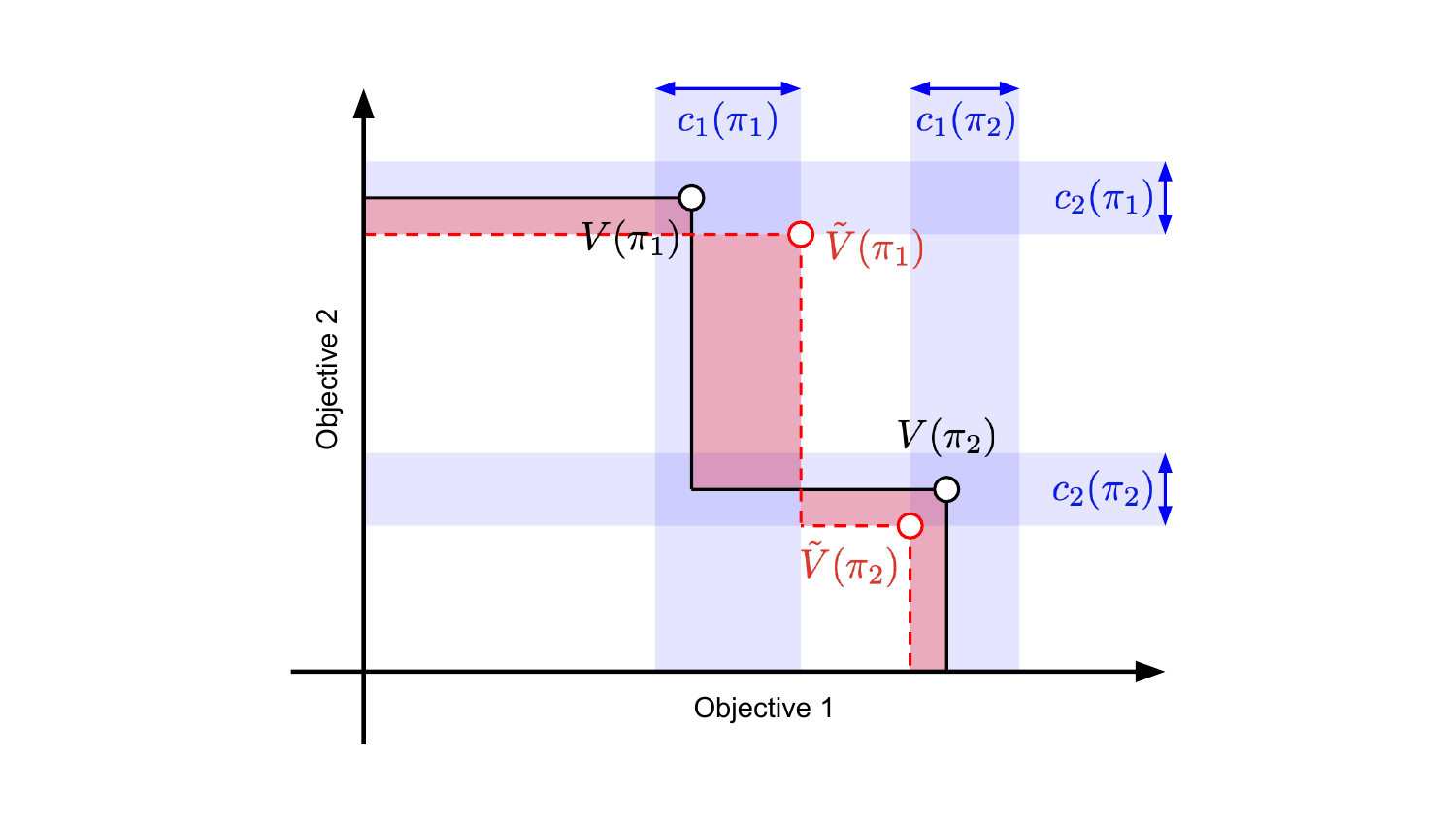}
  \caption{Illustration of \cref{lem:hypervolume approximation error} for $S = \set{\pi_1, \pi_2}$. The blue stripes represent $\sum_{\pi \in S} \sum_{i = 1}^2 c_i(\pi)$ and their areas bound the hypervolume difference (red area).}
  \label{fig:error bound}
\end{figure}

Next we analyze the benefit of acting pessimistically. Our analysis assumes access to $\alpha$-approximate maximization oracles.

\begin{definition}
An oracle is $\alpha$-approximate if for any value function $\tilde{V}: \Pi \to [0, 1]^m$, it computes a set of $K$ policies $\tilde{S} = {\oracle}(\vol(\cdot, \tilde{V}))$ such that
\begin{align*}
  \textstyle
  \vol(\tilde{S}, \tilde{V})
  \geq \alpha \max_{S \subseteq \Pi: |S| = K} \vol(S, \tilde{V})\,.
\end{align*}
\end{definition}

This assumption allows us to study the statistical efficiency of our estimators without being worried about the computational cost of maximizing them. As shown later (\cref{sec:discussion}), the quality of the oracle affects all our bounds identically. Thus the benefit of pessimism can be argued for any oracle and more abstract treatment is appropriate. Note that when $\Pi$ is discrete and small, a computationally-efficient maximization oracle exists for $\alpha = 1 - 1 / e$ (\cref{sec:discrete optimization}).

This section is organized as follows. In \cref{sec:approximate maximization}, we derive error bounds for approximate maximization of functions using their mean and pessimistic estimates. In \cref{sec:hypervolume maximization}, we specialize the bounds to approximate hypervolume maximization. Finally, we compare the bounds in \cref{sec:discussion}. The main novelty in our analysis is that we decompose the uncertainty of hypervolume into those of its points, and show its effect on optimization. Our bounds are general, and apply beyond our IPS estimators in \cref{sec:off-policy estimation} and methods for optimizing them in \cref{sec:hypervolume optimization}. All omitted proofs are in \cref{proof:analysis}.

\subsection{Approximate Maximization}
\label{sec:approximate maximization}

We find it convenient to use a more abstract approach generalizing to arbitrary set functions. Let $\Pi$ be a set of points and $\cS \subseteq 2^\Pi$ be a subset of its power set. Let $g: \cS \to \realset$ be a set function with an approximation $\tilde{g}: \cS \to \realset$. For any set $S \in \cS$, let $c(S) \geq |g(S) - \tilde{g}(S)|$ be an upper bound on the approximation error. Let $S_* = \argmax_{S \in \cS} g(S)$ be the maximizer of $g$. Then the $\alpha$-approximate maximum of $\tilde{g}$ and the true maximum of $g$ can be related as follows.

\begin{lemmarep}
\label{lem:mean} Let $\tilde{S} = \argmax_{S \in \cS} \tilde{g}(S)$ maximize $\tilde{g}$ and $\hat{S}$ be its $\alpha$-approximation, so that $\tilde{g}(\hat{S}) \geq \alpha \tilde{g}(\tilde{S})$ holds for some $\alpha \in [0, 1]$. Then
\begin{align*}
  g(\hat{S})
  \geq \alpha g(S_*) - c(S_*) - c(\hat{S})\,.
\end{align*}
\end{lemmarep}
\begin{appendixproof}
\label{proof:analysis} The claim is proved as
\begin{align*}
  \alpha g(S_*) - g(\hat{S})
  & = \alpha g(S_*) - \alpha \tilde{g}(S_*) + \alpha \tilde{g}(S_*) - g(\hat{S}) \\
  & \leq \alpha (g(S_*) - \tilde{g}(S_*)) + \alpha \tilde{g}(\tilde{S}) - g(\hat{S}) \\
  & \leq \alpha (g(S_*) - \tilde{g}(S_*)) + \tilde{g}(\hat{S}) - g(\hat{S}) \\
  & \leq c(S_*) + c(\hat{S})\,.
\end{align*}
The first inequality holds because $\tilde{S}$ maximizes $\tilde{g}$. The second inequality uses that $\hat{S}$ is an $\alpha$-approximation. The last inequality follows from the definition of function $c$ and $\alpha \in [0, 1]$.
\end{appendixproof}

The following bound holds for maximizing a lower bound $L(S) = \tilde{g}(S) - c(S)$ on $g(S)$.

\begin{lemmarep}
\label{lem:pessimism} Let $\tilde{S} = \argmax_{S \in \cS} L(S)$ maximize $L$ and $\hat{S}$ be its $\alpha$-approximation, so that $L(\hat{S}) \geq \alpha L(\tilde{S})$ holds for some $\alpha \in [0, 1]$. Then
\begin{align*}
  g(\hat{S})
  \geq \alpha g(S_*) - 2 c(S_*)\,.
\end{align*}
\end{lemmarep}
\begin{appendixproof}
The claim is proved as
\begin{align*}
  \alpha g(S_*) - g(\hat{S})
  & = \alpha g(S_*) - \alpha L(S_*) + \alpha L(S_*) - g(\hat{S}) \\
  & \leq \alpha (g(S_*) - L(S_*)) + \alpha L(\tilde{S}) - g(\hat{S}) \\
  & \leq \alpha (g(S_*) - L(S_*)) + L(\hat{S}) - g(\hat{S}) \\
  & \leq 2 c(S_*)\,.
\end{align*}
The first inequality holds because $\tilde{S}$ maximizes $L$ and the second inequality uses that $\hat{S}$ is an $\alpha$-approximation. The last inequality follows from $L(S_*) = \hat{g}(S_*) - c(S_*)$ and $L(\hat{S}) - g(\hat{S}) \leq 0$. After that, we use the definition of function $c$ and $\alpha \in [0, 1]$.
\end{appendixproof}

Optimization of the estimated mean (\cref{lem:mean}) and the lower bound (\cref{lem:pessimism}) differ as follows. In the former, the error can be arbitrarily large if $\tilde{g}$ significantly overestimates $g$ on some set $S$. In the latter, the error is bounded by the error at the optimal solution $S_*$ only. Arguably, if this one is high, $S_*$ is hard to identify. Therefore, maximization of a lower bound yields more robust solutions.

\subsection{Hypervolume Maximization}
\label{sec:hypervolume maximization}

Now we use the error bounds in \cref{sec:approximate maximization} to analyze IPS hypervolume maximization (\cref{sec:ips}). The key step in our argument is to relate the hypervolume under an approximation $\tilde{V}$ to that under the true function $V$. We relate these through errors in individual objective estimates.

\begin{lemmarep}
\label{lem:hypervolume approximation error} Let $V_i(\pi), \tilde{V}_i(\pi) \in [0, 1]$ for all $i \in [m]$ and $\pi \in \Pi$. Assume that $|V_i(\pi) - \tilde{V}_i(\pi)| \leq c_i(\pi)$ for all $i \in [m]$ and $\pi \in \Pi$. Then
\begin{align*}
  |\vol(S, V) - \vol(S, \tilde{V})|
  \leq c(S)
  = \sum_{\pi \in S} \sum_{i = 1}^m c_i(\pi)\,.
\end{align*}
\end{lemmarep}
\begin{appendixproof}
We start with the observation that for any two vectors $a, b \in \set{0, 1}^d$,
\begin{align}
  \abs{\prod_{i = 1}^d a_i - \prod_{i = 1}^d b_i}
  \leq \sum_{i = 1}^d \abs{a_i - b_i}\,, \quad
  \abs{1 - \prod_{i = 1}^d (1 - a_i) -
  \left(1 - \prod_{i = 1}^d (1 - b_i)\right)}
  \leq \sum_{i = 1}^d \abs{a_i - b_i}\,.
  \label{eq:and or}
\end{align}
In plain English, the difference in the logical \say{and} and \say{or} over entries of these vectors is bounded by the sum of the differences of their entries. The definition of the hypervolume together with these inequalities yields
\begin{align*}
  |\vol(S, V) - \vol(S, \tilde{V})|
  & \leq \int_{y \in [0, 1]^m} \abs{\I{\bigvee_{\pi \in S} \{y \leq V(\pi)\}} -
  \I{\bigvee_{\pi \in S} \{y \leq \tilde{V}(\pi)\}}} \dif y \\
  & \leq \sum_{\pi \in S} \int_{y \in [0, 1]^m}
  \abs{\I{y \leq V(\pi)} - \I{y \leq \tilde{V}(\pi)}} \dif y \\
  & \leq \sum_{\pi \in S} \sum_{i = 1}^m \int_{y \in [0, 1]}
  \abs{\I{y \leq V_i(\pi)} - \I{y \leq \tilde{V}_i(\pi)}} \dif y \\
  & = \sum_{\pi \in S} \sum_{i = 1}^m \abs{V_i(\pi)- \tilde{V}_i(\pi)}
  \leq \sum_{\pi \in S} \sum_{i = 1}^m c_i(\pi)
  = c(S)\,.
\end{align*}
In the first and second inequalities, we use the \say{or} and \say{and} inequalities in \eqref{eq:and or}, respectively. The rest follows from basic integration identities and that we integrate over a $[0, 1]^m$ hypercube.
\end{appendixproof}

The lemma says that the difference in the hypervolume of $S$ under $V$ and $\tilde{V}$ is bounded by the sum of differences of $V_i$ and $\tilde{V}_i$ in individual policies $\pi \in S$. We visualize this in \cref{fig:error bound}.

To obtain an error bound for IPS hypervolume maximization, we chain \cref{lem:mean,lem:hypervolume approximation error}. Our analysis is under the assumption that \eqref{eq:ips} is clipped to $[0, 1]$.

\begin{theorem}
\label{thm:ips} Let $V_i(\pi), \hat{V}_i(\pi) \in [0, 1]$ for all $i \in [m]$ and $\pi \in \Pi$. Let $|V_i(\pi) - \hat{V}_i(\pi)| \leq c_i(\pi)$ hold jointly for all $i \in [m]$ and $\pi \in \Pi$ with probability at least $1 - \delta$. Let $\hat{S} = {\oracle}(\vol(\cdot, \hat{V}))$ be an $\alpha$-approximate solution. Then with probability at least $1 - \delta$,
\begin{align*}
  \vol(\hat{S}, V)
  \geq \alpha \vol(S_*, V) - c(S_*) - c(\hat{S})\,,
\end{align*}
where $c(S) = \sum_{\pi \in S} \sum_{i = 1}^m c_i(\pi)$.
\end{theorem}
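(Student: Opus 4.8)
The plan is to chain \cref{lem:hypervolume approximation error} with \cref{lem:mean}, after first conditioning on the high-probability event on which the per-objective confidence bounds hold. First I would define the event $\cE$ on which $|V_i(\pi) - \hat{V}_i(\pi)| \leq c_i(\pi)$ holds simultaneously for every $i \in [m]$ and every $\pi \in \Pi$; by assumption $\prob{\cE} \geq 1 - \delta$. All of the remaining argument is deterministic and takes place on $\cE$, so the final probability statement follows directly from the choice of $\cE$.

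Working on $\cE$, the hypotheses of \cref{lem:hypervolume approximation error} are met with $\tilde{V} = \hat{V}$, so for every feasible set $S$ we obtain $|\vol(S, V) - \vol(S, \hat{V})| \leq c(S) = \sum_{\pi \in S}\sum_{i=1}^m c_i(\pi)$. This is exactly the uniform approximation-error bound required to apply the abstract result of \cref{lem:mean}, with the instantiation $g(\cdot) = \vol(\cdot, V)$, $\tilde{g}(\cdot) = \vol(\cdot, \hat{V})$, set system $\cS = \set{S \subseteq \Pi : |S| = K}$, and error function $c$.

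Next I would match up the optimization objects. Let $\tilde{S} = \argmax_{S \in \cS} \vol(S, \hat{V})$ be the exact maximizer of the estimated hypervolume. By the definition of the $\alpha$-approximate oracle, $\hat{S} = \oracle(\vol(\cdot, \hat{V}))$ satisfies $\vol(\hat{S}, \hat{V}) \geq \alpha \max_{S \in \cS} \vol(S, \hat{V}) = \alpha\, \vol(\tilde{S}, \hat{V})$, i.e. $\tilde{g}(\hat{S}) \geq \alpha\, \tilde{g}(\tilde{S})$, which is precisely the $\alpha$-approximation hypothesis of \cref{lem:mean}. Since $S_* = \argmax_{S \in \cS} \vol(S, V)$ is the maximizer of $g$, \cref{lem:mean} immediately yields $\vol(\hat{S}, V) \geq \alpha\, \vol(S_*, V) - c(S_*) - c(\hat{S})$ on $\cE$, and the claim follows.

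The step that requires the most care is the order of quantifiers in the confidence event: \cref{lem:hypervolume approximation error} must hold for both $S_*$ and $\hat{S}$ (and in fact for every set entering the oracle comparison) on a single event, which is why the theorem assumes the per-objective bounds hold \emph{jointly} over all $i$ and $\pi$ with probability $1 - \delta$, rather than pointwise. A second subtlety worth flagging is that the oracle approximates the maximizer of $\vol(\cdot, \hat{V})$ and not of $\vol(\cdot, V)$; the gap between these two optimizers is exactly what \cref{lem:mean} absorbs into the $-c(S_*) - c(\hat{S})$ penalty. No genuinely hard estimate remains once these bookkeeping points are settled, so the proof is essentially a clean composition of the two lemmas.
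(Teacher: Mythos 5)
Your proof is correct and follows essentially the same route as the paper's: condition on the joint confidence event, invoke \cref{lem:hypervolume approximation error} to get $|\vol(S, V) - \vol(S, \hat{V})| \leq c(S)$, and then apply \cref{lem:mean} with $g = \vol(\cdot, V)$ and $\tilde{g} = \vol(\cdot, \hat{V})$. The additional bookkeeping you supply (the explicit event $\cE$ and the matching of the oracle output to the $\alpha$-approximation hypothesis) is left implicit in the paper but is exactly the intended argument.
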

\begin{proof}
Since $|V_i(\pi) - \hat{V}_i(\pi)| \leq c_i(\pi)$, by \cref{lem:hypervolume approximation error}
\begin{align*}
  |\vol(S, V) - \vol(S, \hat{V})|
  \leq c(S)\,.
\end{align*}
Now we apply \cref{lem:mean}, where $g(S) = \vol(S, V)$ and $\tilde{g}(S) = \vol(S, \hat{V})$.
\end{proof}

\cref{thm:ips} says that $\vol(\hat{S}, V)$ is within a multiplicative factor of $\alpha$ of $\vol(S_*, V)$. The additional error depends on the magnitude of confidence intervals $c_i(\pi)$ for $\pi \in S_* \cup \hat{S}$. We discuss this more in \cref{sec:discussion}.

Now we use the error bounds from \cref{sec:approximate maximization} to analyze pessimistic IPS hypervolume maximization (\cref{sec:pessimistic ips}). The proof is analogous to \cref{thm:ips}, with the only difference that we apply \cref{lem:pessimism} instead of \cref{lem:mean}. We assume that \eqref{eq:pessimistic ips} is clipped to $[0, 1]$.

\begin{theorem}
\label{thm:pessimistic ips} Let $L_i(\pi) = \hat{V}_i(\pi) - c_i(\pi)$ be a lower confidence bound. Let $V_i(\pi), L_i(\pi) \in [0, 1]$ for all $i \in [m]$ and $\pi \in \Pi$. Let $|V_i(\pi) - \hat{V}_i(\pi)| \leq c_i(\pi)$ hold jointly for all $i \in [m]$ and $\pi \in \Pi$ with probability at least $1 - \delta$. Let $\hat{S} = {\oracle}(\vol(\cdot, L))$ be an $\alpha$-approximate solution. Then with probability at least $1 - \delta$,
\begin{align*}
  \vol(\hat{S}, V)
  \geq \alpha \vol(S_*, V) - 2 c(S_*)\,,
\end{align*}
where $c(S) = \sum_{\pi \in S} \sum_{i = 1}^m c_i(\pi)$.
\end{theorem}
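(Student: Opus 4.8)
The plan is to follow the same template as the proof of \cref{thm:ips}, substituting \cref{lem:pessimism} for \cref{lem:mean}. The genuine subtlety---and the reason this is not a literal copy---is that the oracle here maximizes $\vol(\cdot, L)$, the hypervolume of the per-objective lower confidence bounds $L_i(\pi) = \hat{V}_i(\pi) - c_i(\pi)$, whereas \cref{lem:pessimism} optimizes an object of the specific form $\tilde{g}(\cdot) - c(\cdot)$. I would first condition on the event $\cE$ that $|V_i(\pi) - \hat{V}_i(\pi)| \le c_i(\pi)$ holds jointly over all $i \in [m]$ and $\pi \in \Pi$; by hypothesis $\prob{\cE} \ge 1 - \delta$, and the entire argument is deterministic on $\cE$.

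First I would establish a two-sided sandwich for the LCBs. On $\cE$ we have $\hat{V}_i(\pi) - c_i(\pi) \le V_i(\pi)$, and since $V_i(\pi) \ge 0$, clipping $L_i(\pi)$ to $[0,1]$ only moves it toward $V_i(\pi)$ without overshooting; hence $0 \le L_i(\pi) \le V_i(\pi)$. In the other direction, $V_i(\pi) - L_i(\pi) \le V_i(\pi) - (\hat{V}_i(\pi) - c_i(\pi)) \le 2 c_i(\pi)$, and clipping only shrinks this gap. Thus $0 \le V_i(\pi) - L_i(\pi) \le 2 c_i(\pi)$ for all $i$ and $\pi$ on $\cE$. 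Monotonicity of the hypervolume in each coordinate gives $\vol(S, L) \le \vol(S, V)$, while \cref{lem:hypervolume approximation error} applied with the per-objective bound $2 c_i(\pi)$ gives $\vol(S, V) - \vol(S, L) \le 2 c(S)$, where $c(S) = \sum_{\pi \in S} \sum_{i=1}^m c_i(\pi)$.

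With this sandwich I would invoke \cref{lem:pessimism} under the identification $g(S) = \vol(S, V)$, $\tilde{g}(S) = \vol(S, L) + c(S)$, and error bound $c(S)$. The additive correction in $\tilde{g}$ is the crux: it makes the abstract lower bound $\tilde{g}(S) - c(S)$ coincide with $\vol(S, L)$, so the oracle hypothesis $\vol(\hat{S}, L) \ge \alpha \vol(\tilde{S}, L)$ is exactly the $\alpha$-approximation condition the lemma requires. The error bound $c(S) \ge |g(S) - \tilde{g}(S)|$ also holds on $\cE$, since $g(S) - \tilde{g}(S) = (\vol(S, V) - \vol(S, L)) - c(S) \in [-c(S), c(S)]$ by the two sides of the sandwich. \cref{lem:pessimism} then delivers $\vol(\hat{S}, V) \ge \alpha \vol(S_*, V) - 2 c(S_*)$ on $\cE$, hence with probability at least $1 - \delta$.

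The main obstacle is conceptual rather than computational: recognizing that $\vol(\cdot, L)$ cannot be passed to \cref{lem:pessimism} as the raw estimate $\tilde{g}$, because the lemma optimizes $\tilde{g} - c$, not $\tilde{g}$; the $S$-dependent shift $\tilde{g}(S) = \vol(S, L) + c(S)$ reconciles the two while keeping the proof genuinely analogous to \cref{thm:ips}. Equivalently, one may bypass the reframing and re-run the inequality chain in the proof of \cref{lem:pessimism} directly with the lower bound taken to be $\vol(\cdot, L)$, which only uses the two facts $\vol(\hat{S}, L) \le \vol(\hat{S}, V)$ and $\vol(S_*, V) - \vol(S_*, L) \le 2 c(S_*)$ established above. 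The only other point requiring care is verifying that clipping to $[0,1]$ preserves both sides of the sandwich, which holds because on $\cE$ the clip never pushes $L_i(\pi)$ above $V_i(\pi)$.
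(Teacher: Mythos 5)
Your proof is correct, and it follows the same high-level template as the paper's: condition on the confidence event, convert per-objective errors into a hypervolume error via \cref{lem:hypervolume approximation error}, and invoke \cref{lem:pessimism}. The difference is that you explicitly repair a mismatch that the paper's one-line proof glosses over: the oracle maximizes $\vol(\cdot, L)$, the hypervolume of the per-objective LCBs, whereas \cref{lem:pessimism} is stated for an objective of the exact form $\tilde{g}(\cdot) - c(\cdot)$, and $\vol(S, L) \neq \vol(S, \hat{V}) - c(S)$ in general. The paper simply writes ``apply \cref{lem:pessimism} with $\tilde{g}(S) = \vol(S, L)$,'' which, read literally, would mean the oracle optimizes $\vol(S,L) - c(S)$ rather than $\vol(S,L)$. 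Your reframing with $\tilde{g}(S) = \vol(S, L) + c(S)$ --- verified via the sandwich $0 \le \vol(S,V) - \vol(S,L) \le 2c(S)$ obtained from monotonicity and from \cref{lem:hypervolume approximation error} applied with per-objective widths $2c_i(\pi)$ --- makes the lemma's hypotheses hold exactly, and your alternative of re-running the inequality chain using only $\vol(\hat{S}, L) \le \vol(\hat{S}, V)$ and $\vol(S_*, V) - \vol(S_*, L) \le 2c(S_*)$ is equally valid (those are the only two facts the lemma's proof actually consumes). Your check that clipping $L_i(\pi)$ to $[0,1]$ preserves both sides of the sandwich on the good event is also a point the paper leaves implicit. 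In short: same route, but your version is the rigorous one.
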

\begin{proof}
Since $|V_i(\pi) - \hat{V}_i(\pi)| \leq c_i(\pi)$, by \cref{lem:hypervolume approximation error}
\begin{align*}
  |\vol(S, V) - \vol(S, \hat{V})|
  \leq c(S)\,.
\end{align*}
Now we apply \cref{lem:pessimism}, where $g(S) = \vol(S, V)$ and $\tilde{g}(S) = \vol(S, L)$.
\end{proof}

\cref{thm:pessimistic ips} says that $\vol(\hat{S}, V)$ is within a multiplicative factor of $\alpha$ of $\vol(S_*, V)$. The additional error depends on the magnitude of confidence intervals $c_i(\pi)$ for $\pi \in S_*$. We discuss this more in \cref{sec:discussion}.

\subsection{Discussion}
\label{sec:discussion}

\cref{thm:ips,thm:pessimistic ips} are similar in two aspects. First, both say that the solution $\hat{S}$ is $\alpha$-approximate up to the uncertainty in the estimate of $V$. Second, the uncertainty is characterized by hypervolume confidence interval widths. Specifically, when $M_{t, \pi} = O(1)$ in \eqref{eq:maximum propensity score}, the widths are
\begin{align*}
  c(S)
  = O(\beta \sigma K m \sqrt{1 / n})\,.
\end{align*}
As expected, they increase as the reward noise $\sigma$ increases, the number of optimized policies $K$ increases, the number of objectives $m$ increases, and the probability $1 - \delta$ with which the guarantee holds increases. The decrease with a larger sample size $n$ is expected as well.

\cref{thm:ips,thm:pessimistic ips} differ only in confidence intervals. In \cref{thm:pessimistic ips}, the confidence interval depends on the optimal set of policies $S_*$ only. Therefore, when the logging policy $\pi_0$ is near optimal, $c(S_*)$ is small and pessimistic IPS maximization is comparable to maximizing $\vol(\cdot, V)$, even if $V$ is unknown and potentially poorly estimated everywhere else but around $\pi \in S_*$.

Such a guarantee cannot be proved for the IPS maximization in \cref{thm:ips}. To demonstrate this, suppose that there exists a policy $\hat{\pi} \in \Pi$ such that $\hat{V}_i(\hat{\pi}) \approx 1$ and $c_i(\hat{\pi}) \approx 1$ for all $i \in [m]$. Based on $\hat{V}_i(\hat{\pi})$ alone, $\hat{\pi}$ is a highly-rewarding policy. However, when the confidence intervals are considered, it is clear that $\hat{V}_i(\hat{\pi})$ are unreliable estimates. Therefore, it could be that $V_i(\hat{\pi}) \approx 0$ for all $i \in [m]$. This is captured by the term $c(\hat{S})$ in \cref{thm:ips}, which would be $O(1)$ when $\hat{\pi} \in \hat{S}$ and thus render the guarantee meaningless.

Finally, when the logging policy $\pi_0$ is uniform, we do not expect any benefit of pessimism because all confidence intervals, including $c(S_*)$ and $c(\hat{S})$, would have similar widths. \cref{thm:ips,thm:pessimistic ips} show this.

\section{Hypervolume Optimization}
\label{sec:hypervolume optimization}

Hypervolume optimization in \eqref{eq:optimal policy set} is a hard problem because both the maximization problem and hypervolume computation are. We borrow from prior works to address them. All discussions in this section apply to the value function $V$ in \eqref{eq:policy value}, its IPS estimator in \eqref{eq:ips}, and its pessimistic IPS estimator in \eqref{eq:pessimistic ips}.

\subsection{Discrete Optimization}
\label{sec:discrete optimization}

It is well known that $\vol(S, V)$ is monotone and submodular in $S$ \citep{ulrich2012bounding}. Therefore, it can be maximized greedily with guarantees as follows. In iteration $k \in [K]$, a policy $\pi_k \in \Pi$ that maximally increases the hypervolume, after being added to previously selected policies $\set{\pi_\ell}_{\ell = 1}^{k - 1}$, is chosen,
\begin{align}
  \pi_k
  = \argmax_{\pi \in \Pi} \vol(\set{\pi_1, \dots, \pi_{k - 1}, \pi}, V)\,.
  \label{eq:greedy maximization}
\end{align}
By \citet{nemhauser78approximation}, the greedy solution after $K$ iterations $\hat{S} = \set{\pi_k}_{k = 1}^K$ is $(1 - 1 / e)$-optimal, due to the monotonicity and submodularity of $\vol(S, V)$ in $S$.  Unfortunately, \eqref{eq:greedy maximization} requires $O(\abs{\Pi})$ hypervolume evaluations per iteration. Therefore, it is computationally inefficient when $\Pi$ is large and cannot be applied when $\Pi$ is continuous.

\subsection{Policy Gradient}
\label{sec:policy gradient}

Rather than being limited by discrete optimization in \cref{sec:discrete optimization}, we optimize a general policy class using policy gradients \citep{williams92simple,sutton00policy,baxter01infinitehorizon}. Let
\begin{align}
  \pi(a \mid x; \theta)
  = \frac{\exp[\phi(x, a)\T \theta]}{\sum_{a' \in \cA} \exp[\phi(x, a')\T \theta]}
  \label{eq:policy}
\end{align}
be the probability of taking action $a \in \cA$ in context $x \in \cX$ parameterized by \emph{policy parameter} $\theta \in \Theta$. Here $\phi: \cX \times \cA \to \realset^d$ is an arbitrary feature mapping and $\Theta \subseteq \realset^d$ is the space of policy parameters.

To solve \eqref{eq:optimal policy set}, we apply policy gradient to all $K$ optimized policies. Specifically, let $\theta_{\ell, k} \in \Theta$ be the parameter of policy $k$ in iteration $\ell$ and $\theta_\ell = \bigoplus_{k = 1}^K \theta_{\ell, k}$ be a concatenated parameter vector of all $K$ policies in iteration $\ell$. Then, in iteration $\ell$, we update $\theta_\ell$ as
\begin{align}
  \theta_{\ell + 1}
  = \theta_\ell + \alpha_\ell \nabla_{\theta_\ell}
  \vol(\set{\pi(\cdot \mid \cdot; \theta_{\ell, k})}_{k = 1}^K, V)\,,
  \label{eq:policy gradient}
\end{align}
where $\alpha_\ell$ is the learning rate in iteration $\ell$, which can be adapted. The hypervolume is differentiable in $\theta_\ell$ as long as $V_i(\pi(\cdot \mid \cdot; \theta_{\ell, k}))$ is differentiable in $\theta_{\ell, k}$. This is true for any policy of form \eqref{eq:policy} plugged into the value function in \eqref{eq:policy value}, its IPS estimator in \eqref{eq:ips}, or its pessimistic IPS estimator in \eqref{eq:pessimistic ips}.

In experiments, we implement \eqref{eq:policy gradient} by automatic differentiation with Adam \citep{kingma15adam}. This choice was driven by the popularity of Adam and its good initial performance. We discuss other potential choices in \cref{sec:diverse gradient}.

\subsection{Hypervolume Computation}
\label{sec:hypervolume computation}

Exact computation of the hypervolume of $K$ points is exponential in $K$, because it corresponds to computing the union of $K$ hyperrectangle volumes. Such computations are only feasible when $K$ is small (\cref{sec:inclusion-exclusion estimator}). Efficient exact algorithms also exist for $m = 2$ objectives (\cref{sec:two objectives}). In general, the computation can be reduced to Klee's measure problem and the best known computational complexity is $\tilde{O}(K^{\frac{m}{3}})$ \citep{chan2008slightly}. Despite this, many efficient approximation exists (\cref{sec:hypervolume}).

\section{Experiments}
\label{sec:experiments}

\begin{figure*}[t]
  \centering
  \includegraphics[width=17cm]{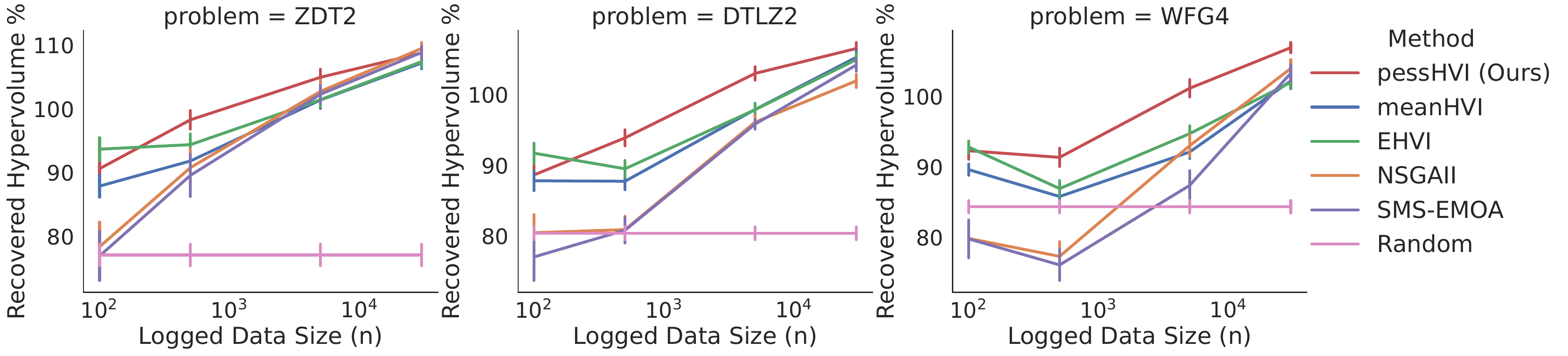}
  \vspace{-0.05in}
  \caption{Comparison of \pesshvi to baselines for $K = 10$ while varying logged dataset size $n$.}
  \label{fig:n}
\end{figure*}

\begin{figure*}[t]
  \centering
  \includegraphics[width=17cm]{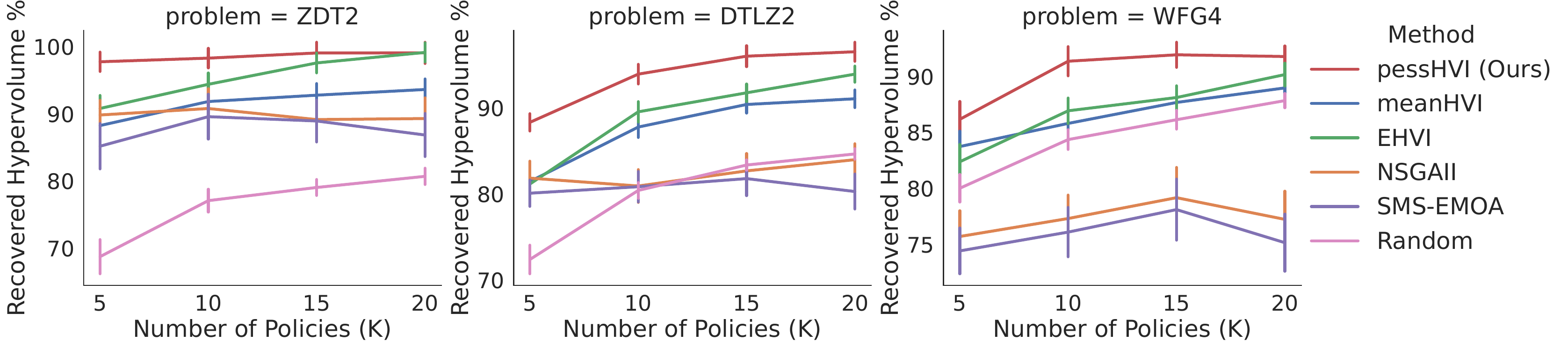}
  \vspace{-0.05in}
  \caption{Comparison of \pesshvi to baselines at $n = 500$ while varying the number of optimized policies $K$.}
  \label{fig:K}
\end{figure*}

We also evaluate the benefit of pessimism empirically. Due to space constraints, we only show representative trends and defer the rest to \cref{sec:additional experiments}.

\subsection{Benchmarks}
\label{sec:experiments benchmarks}

No standardized benchmarks exist for evaluating off-policy MOO. Therefore, we adapt three popular MOO bechmarks, which have been used in numerous works: ZDT \citep{zitzler2000comparison}, DTLZ \citep{deb2005scalable}, and WFG \citep{huband05scalable}. ZDT is a set of bi-objective problems where the number of features can vary. Both the number of objectives and features can vary in DTLZ and WFG problems.

At a high level, we use multi-objective functions in existing benchmarks to define the mean rewards $r(x, a)$ in \eqref{eq:policy value}. The mean rewards can be controlled through actions $a$ and we optimize policies over them. Specifically, let $\realset^d$ be the feature space of an existing benchmark and $f: \realset^d \to \realset^m$ be its multi-objective function. We split the feature space as $\realset^d = \cX \times \cA$, where $\cX = \realset^{\frac{d}{2}}$ and $\cA = \realset^{\frac{d}{2}}$ are the context and action sets, respectively. The mean reward for taking action $a \in \cA$ in context $x \in \cX$ is $r(x, a) = f(x \oplus a)$, where $u \oplus v$ is the concatenation of vectors $u$ and $v$. We discretize $\cA$ by $20$ random points to guarantee that the probability of taking an action in \eqref{eq:policy} can be properly normalized. The features are $\phi(x, a) = x \oplus a \oplus \mathrm{vec}(x a\T) \oplus (1)$, where $\mathrm{vec}(M)$ denotes the vectorization of matrix $M$. We introduce the cross-interaction term $x a\T$ to allow for context-dependent policies.

\subsection{Evaluation Protocol}
\label{sec:evalaution protocol}

\textbf{Compared methods.} Our method is a policy gradient (\cref{sec:policy gradient}) with the pessimistic IPS estimator in \eqref{eq:pessimistic ips}. We call it \pesshvi. We set $\beta$ in \cref{lem:confidence interval} to a practical value of $0.2$, which performed well in our initial experiments. To show the benefit of pessimism, we compare \pesshvi to a policy gradient with the IPS estimator in \eqref{eq:ips}. We call it \meanhvi.

We consider four additional baselines. The first baseline selects $K$ random policies of form \eqref{eq:policy}, where $\theta$ is randomly chosen from a unit ball. We call it \random. This baseline establishes what can be achieved with a minimal computational cost. The next two baselines are state-of-the-art genetic algorithms: NSGA-II \citep{deb02fast} and SMS-EMOA \citep{beume07smsemoa}. We implement them with the IPS estimator in \eqref{eq:ips}.

The last baseline is a state-of-the-art approach of \emph{expected hypervolume improvement (EHVI)} \citep{emmerich05emo,emmerich2005single,ernst05treebased,yang2019multi}. The main difficulty that we faced in implementing it was that our setting is not Bayesian, and thus there is no prior or posterior. At the end, we implement it using bootstrapping \citep{efron86bootstrap}, which is known to be equivalent to posterior sampling in several notable cases \citep{lu17ensemble,vaswani18new}. Specifically, we take the logged dataset $\cD$ and resample it $N$ times with replacement. Let $\tilde{V}_j(\pi)$ be the IPS estimate of value function $V(\pi)$ from resampled dataset $j \in [N]$. Treating it as a posterior sample, the expected hypervolume for policy $\pi$ could be approximated as $\frac{1}{N} \sum_{j = 1}^N \vol(S, \tilde{V}_j(\pi))$. We optimize EHVI using policy gradient over the policy class in \eqref{eq:policy}, exactly as in \pesshvi.

\textbf{Hypervolume computation.} All methods are described in \cref{sec:hypervolume}. We use the exact formula in \cref{sec:two objectives} for $m = 2$ objectives. For $m > 2$, we use the scalarized approximation in \cref{sec:random hypervolume scalarization}.

\textbf{Logging policy.} We define the policy as follows. For any context $x \in \cX$, let $F_{x, a}$ indicate that $r(x, a)$ is on the Pareto front of $\set{r(x, a)}_{a \in \cA}$. Then
\begin{align}
  \pi_0(a \mid x)
  \propto \varepsilon / \abs{\cA} +
  (1 - \varepsilon) F_{x, a} \Big/ \sum_{a \in \cA} F_{x, a}\,.
  \label{eq:logging policy}
\end{align}
The policy $\pi_0$ is random with probability $\varepsilon$ and takes near-optimal actions otherwise. We set $\varepsilon = 0.1$. This mimics a real-world setting where the deployed policy is already of a high quality.

\textbf{Evaluation.} We evaluate all methods by \emph{recovered hypervolume}, which is the hypervolume of their solutions over its estimated maximum. Since all experiments are simulations, we know $V$ and approximate the maximum hypervolume as $\vol(\tilde{S}, f)$, where $\tilde{S}$ are $10\,000$ random policies, chosen as in \random. The recovered hypervolume is averaged over $20$ runs and we also report the standard error of its estimate. We log up to $n = 30\,000$ points with noise $\sigma = 1$ in each run.

\subsection{Results}
\label{sec:experiments results}

\begin{figure*}[t]
  \centering
  \includegraphics[width=16cm]{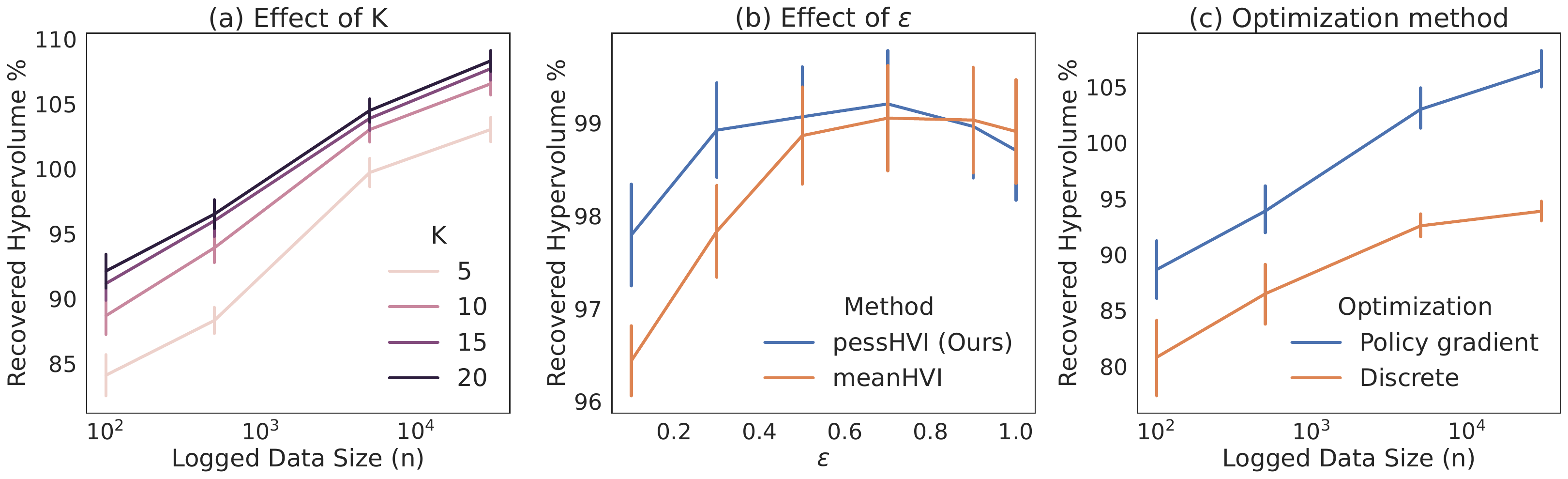}
  \vspace{-0.05in}
  \caption{(a) Recovered hypervolume by \pesshvi as a function of $K$ and logged dataset size $n$. (b) The benefit of pessimism diminishes as the logging policy becomes more uniform, $\varepsilon \to 1$. (c) Comparison of the recovered hypervolume by policy gradient and discrete optimization in \pesshvi with $K = 10$.}
  \label{fig:ablation study}
\end{figure*}

In \cref{fig:n,fig:K}, we report the performance of all methods on selected ZDT, DTZL, and WFG benchmarks with $m = 2$ objectives and $d = 6$ features. In \cref{fig:n}, we fix the number of optimized policies at $K = 10$ and vary the logged dataset size $n$. In \cref{fig:K}, we fix the logged dataset size at $n = 500$ and vary the number of optimized policies $K$. We observe two major trends. First, all methods generally improve as we increase $n$ or $K$. This is expected since larger sample sizes $n$ yield better estimates of $V$ and larger $K$ make hypervolume optimization easier. Second, \pesshvi consistently outperforms \meanhvi and all baselines. This shows that pessimistic estimators are more robust to optimization from logged data, as suggested by our analysis in \cref{sec:analysis}.

We present additional results on $5$ ZDT, $7$ DTLZ, and $9$ WFG problems in \cref{sec:additional experiments}; and observe similar trends to \cref{fig:n,fig:K}. We also include $7$ DTLZ problems with more objectives and features.

In \cref{fig:ablation study}, we conduct an ablation study of recovered hypervolume by \pesshvi on DTLZ2 problem. In \cref{fig:ablation study}a, we vary the logged dataset size $n$ and the number of optimized policies $K$. The recovered hypervolume improves in both. This is expected, since larger sample sizes $n$ yield better estimates of $V$ and larger $K$ make hypervolume optimization easier. In \cref{fig:ablation study}b, we vary $\varepsilon$ in the logging policy $\pi_0$, and set $n = 5\,000$ and $K = 10$. We observe that the recovered hypervolume by \pesshvi approaches that of \meanhvi as $\pi_0$ becomes more uniform, $\varepsilon \to 1$. This confirms our theory in \cref{sec:discussion}, showing that the benefit of pessimism diminishes when all confidence intervals have comparable widths. Finally, in \cref{fig:ablation study}c, we compare policy-gradient optimization (\cref{sec:policy gradient}) to discrete optimization over $1\,000$ random policies (\cref{sec:discrete optimization}). The discrete optimization yields subpar results, likely due to insufficient discretization. This is why we conduct continuous optimization by policy gradients.

\section{Related Work}
\label{sec:related work}

Our paper is at the intersection of several fields and we review prior works in detail in \cref{sec:additional related work}. Here we discuss only some. Our method is an instance of \emph{a-posteriori} methods, which cover the Pareto front by a diverse set of points. Notable approaches include random scalarization in ParEGO \citep{knowles2006parego} and evolutionary methods, such as MOEA/D and NSGA-II \citep{zhang2007moea,deb02fast}. The \emph{hypervolume indicator} has become the metric of choice in several recent works that provide guarantees \citep{auer16pareto,zhang20random}.

MOO has been studied extensively in the online setting, where the learning agent interactively explores the Pareto front \citep{drugan13designing}. Both upper confidence bound and posterior sampling methods were proposed \citep{auer16pareto,yahyaa15thompson}, some of which are designed for Gaussian processes \citep{zuluaga2013active,paria19flexible}. Multi-objective reinforcement learning (RL) is a natural generalization of a single-step optimization and an active research area \citep{hayes2022practical}.

MOO is relatively understudied in the off-policy optimization setting. \citet{wang22imo3} formalized this problem as optimizing a scalarized objective, where the scalarization is learned by interacting with a policy designer. In contrast, our method is a-posteriori, produces a set of diverse policies without any human input, and incorporates pessimism. Two recent offline RL papers also assumed a known scalarization \citep{satija21multiobjective,wu21offline}. Both consider a form of pessimism and apply it to finite-state models. These methods are a-priori because the scalarization is assumed to be known. Our method is a-posteriori and does not assume that the context set is finite. Finally, \citet{zhu23scaling} used hypervolume to obtain expert trajectories in offline multi-objective RL. This work is empirical and does not use pessimism. In comparison, we show the value of pessimistic hypervolume optimization in both theory and experiments.

\section{Conclusions}
\label{sec:conclusions}

We study offline optimization of multi-objective policies. We propose a practical a-posteriori approach to this problem, which maximizes a pessimistic hypervolume estimate for a diverse set of policies. The maximization is done by applying policy gradient jointly to all policies. We showcase the benefit of pessimism both theoretically and empirically.

This is one of the first works on offline optimization of multi-objective policies (\cref{sec:related work}). We analyze the benefit of pessimism generally (\cref{sec:analysis}), beyond our IPS estimators (\cref{sec:off-policy estimation}) and methods for optimizing them (\cref{sec:hypervolume optimization}). Consequently, our results are of a general interest, and could lay ground for analyzing other notions of diversity in a-posteriori MOO. One shortcoming of our approach is that each objective is modeled separately. Therefore, we do not take advantage of correlations among the objectives, which could improve statistical efficiency.

\bibliographystyle{abbrvnat}
\bibliography{References,Brano}



\clearpage
\onecolumn

\begin{toappendix}
\section{Additional Experiments}
\label{sec:additional experiments}

We conduct additional experiments on ZDT \citep{zitzler2000comparison}, DTLZ \citep{deb2005scalable}, and WFG \citep{huband05scalable} problems. The setting is the same as in \cref{sec:experiments results}.

\subsection{ZDT Problems}
We experiment with $5$ ZDT problems out of $6$, with $m = 2$ objectives and $d = 6$ features. ZDT5 is excluded since it is a discrete optimization problem. The remaining $5$ problems are continuous. Our results are reported in \cref{fig:zdt}. We observe that \pesshvi consistently improves upon all baselines when $n \geq 500$.
\vspace{-0.1in}

\begin{figure*}[h!]
  \centering
  \includegraphics[width=0.94\textwidth]{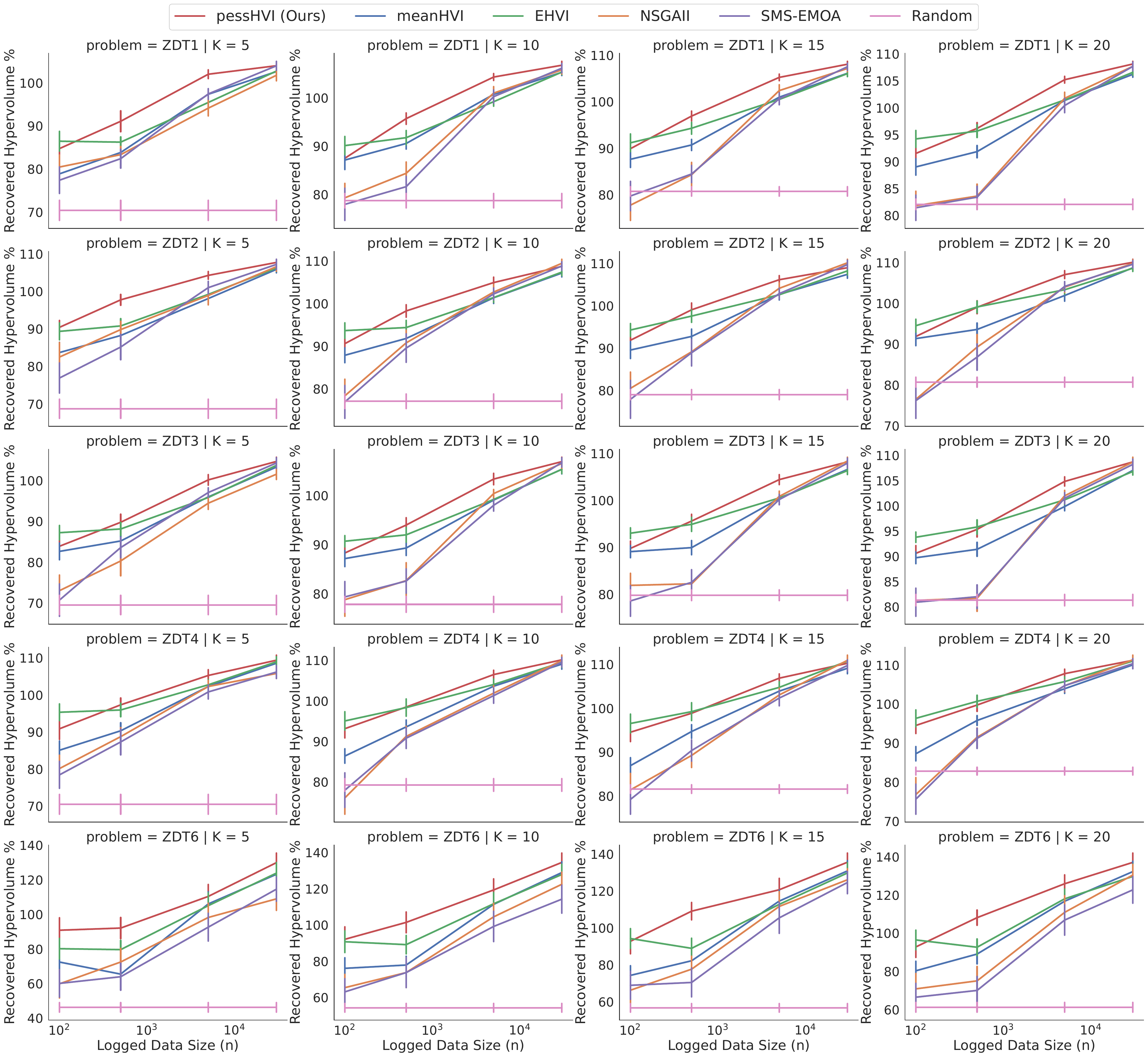}
  \caption{Evaluation of \pesshvi and all baselines on $5$ ZDT problems, for different values of $K$ and $n$.}
  \label{fig:zdt}
\end{figure*}

\subsection{DTLZ Problems}
\label{sec:dtlz problems}

We experiment with $7$ DTLZ problems out of $9$, with $m = 2$ objectives and $d = 6$ features. We exclude DTLZ8 and DTLZ9 because these problems are constrained. The remaining $7$ problems are unconstrained. Our results are reported in \cref{fig:dtlz}. We observe that \pesshvi consistently improves upon all baselines when $n \geq 500$. The only exception is DTLZ6, where many methods perform well.
\vspace{-0.1in}

\begin{figure*}[h!]
  \centering
  \includegraphics[width=0.94\textwidth]{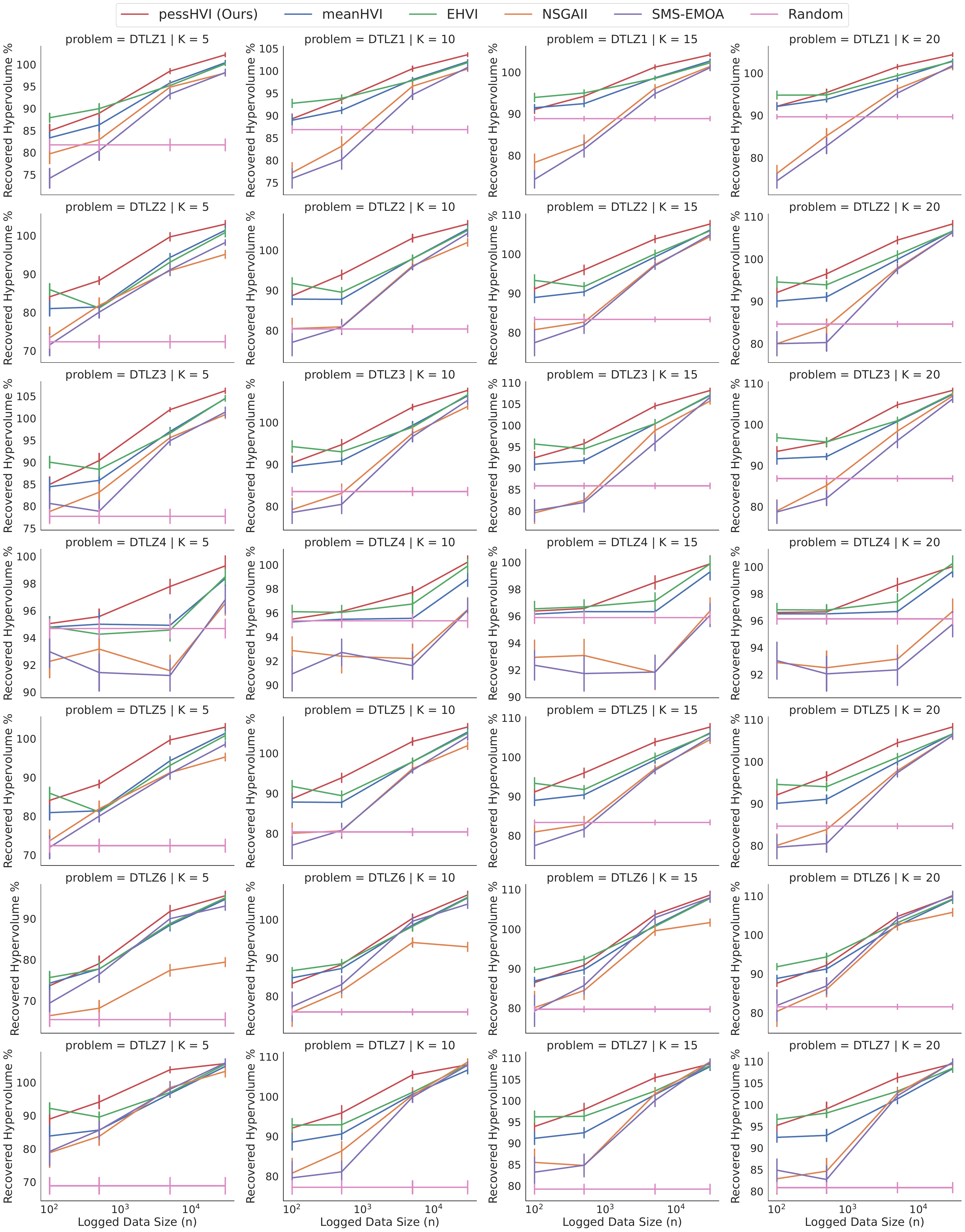}
  \caption{Evaluation of \pesshvi and all baselines on $7$ DTLZ problems, for different values of $K$ and $n$.}
  \label{fig:dtlz}
\end{figure*}

\newpage

\subsection{DTLZ Problems with $4$ Objectives}
Similarly to \cref{sec:dtlz problems}, we experiment with $7$ DTLZ problems, with $m = 4$ objectives and $d = 10$ features. The hypervolume is computed as described in \cref{sec:random hypervolume scalarization}. Our results are reported in \cref{fig:dtlz4}. We observe that \pesshvi consistently improves upon all baselines in $5$ problems. In DTLZ4 and DTLZ6, \pesshvi performs comparably to \meanhvi and \ehvi.

\vspace{-0.05in}
\begin{figure*}[h!]
  \centering
  \includegraphics[width=1\textwidth]{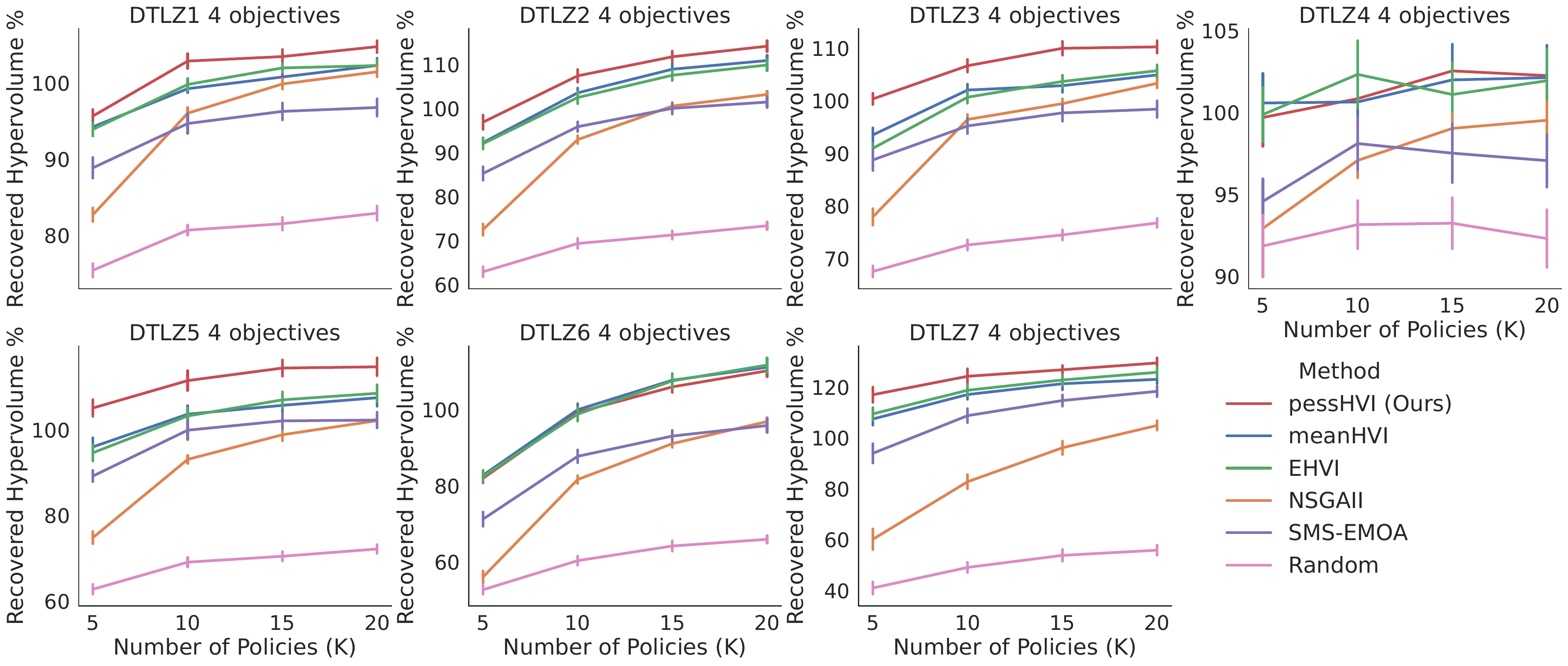}
  \caption{Evaluation of \pesshvi and all baselines on $7$ DTLZ problems with $4$ objectives. We set $n = 5000$ and vary $K$.}
  \label{fig:dtlz4}
\end{figure*}

\vspace{-0.1in}

\subsection{WFG Problems}

We experiment with $9$ WFG problems, with $m = 2$ objectives and $d = 6$ features. Our results are reported in \cref{fig:wfg}. We observe that \pesshvi consistently improves upon all baselines when $n \geq 500$. The only exception is WFG2, where many methods perform well.

\begin{figure*}[h!]
  \centering
  \includegraphics[width=0.81\textwidth]{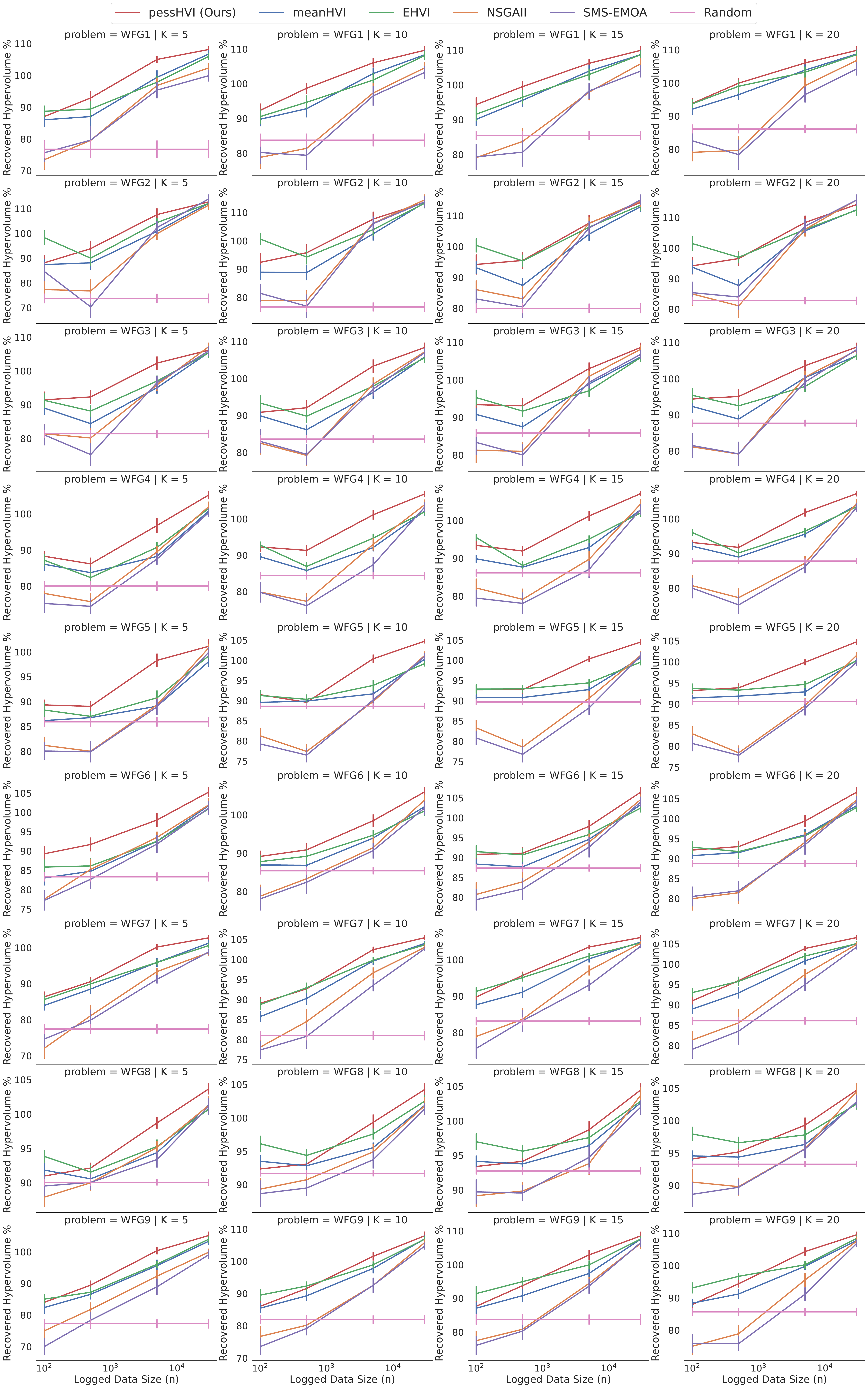}
  \vspace{-0.1in}
  \caption{Evaluation of \pesshvi and all baselines on $9$ WFG problems, for different values of $K$ and $n$.}
  \label{fig:wfg}
\end{figure*}

\section{Other Estimators}
\label{sec:other estimators}

The \emph{direct method (DM)} \citep{dudik14doubly} is a popular approach to off-policy evaluation. Using the DM, the value of policy $\pi$ in objective $i$ can be computed as
\begin{align*}
  \hat{V}^\textsc{dm}_i(\pi)
  = \frac{1}{n} \sum_{t = 1}^n \sum_{a \in \cA} \pi(a \mid x_t) \hat{r}_i(x_t, a)\,,
\end{align*}
where $\hat{r}_i(x, a)$ is the empirical mean estimate of $r_i(x, a)$.

The \emph{doubly-robust method (DR)} \citep{robins94estimation,dudik14doubly} combines the DM and IPS as
\begin{align*}
  \hat{V}^\textsc{dr}_i(\pi)
  = \frac{1}{n} \sum_{t = 1}^n
  \frac{\pi(A_t \mid x_t)}{\pi_0(A_t \mid x_t)} (Y_{t, i} - \hat{r}_i(x_t, A_t)) + \hat{V}^\textsc{dm}(\hat{\pi})\,.
\end{align*}
It is popular because it combines the advantages of the DM and IPS: it is unbiased when the DM estimator is unbiased or the propensities in the IPS estimator are correctly specified.

A \emph{self-normalized IPS (SNIPS)} estimator,
\begin{align*}
  \hat{V}^\textsc{snips}_i(\pi)
  = \frac{1}{\sum_{t = 1}^n \frac{\pi(A_t \mid x_t)}{\pi_0(A_t \mid x_t)}}
  \sum_{t = 1}^n \frac{\pi(A_t \mid x_t)}{\pi_0(A_t \mid x_t)} Y_{t, i}\,,
\end{align*}
is another popular approach. Unlike IPS, it is bounded but biased. However, if the logging policy is supported for all actions in each context, it is consistent \citep{swaminathan15selfnormalized}.

\section{Gradient-Based Methods for Diverse Points}
\label{sec:diverse gradient}

Gradient-based methods for finding a diverse set of points have been previously studied in the multi-task and multi-objective optimization literature. They fall into two categories: $K$ points are optimized directly or their hypervolume is. An early approach in the former direction is algorithm MGDA of \citet{desideri2012multiple}, which uses KKT conditions to compute the direction in which all objectives increase. Various extensions of MGDA have been proposed \citep{sener2018multi,zhou2022convergence,liu2021profiling}. These works do not directly target diversity. Gradient ascent on the hypervolume, as in our work, has been studied. One of the first works on this topic is \citet{wang2017hypervolume}, who used hypervolume gradient derivations of \citet{emmerich2014time}. The main challenge for this approach is that the hypervolume indicator is locally constant if any point is dominated. To avoid this, various modifications to steer dominated points to the boundary have been proposed \citep{deist2020multi,deist2021multi}. We believe that these methods could improve our gradient ascent optimization in \cref{sec:policy gradient}.

\section{Hypervolume Computation}
\label{sec:hypervolume}

We review several existing hypervolume estimators. An exact formula based on the exclusion-inclusion principle is presented in \cref{sec:inclusion-exclusion estimator}. Unfortunately, it is computationally intractable when $S$ is large. In \cref{sec:two objectives}, we discuss an exact formula for two objectives that has $O(\abs{S} \log \abs{S})$ computation time. Finally, we present an approximation with $O(\abs{S})$ computation time in \cref{sec:random hypervolume scalarization}. All formulas are stated for any multi-objective function $f: \Pi \to \realset^m$.

\subsection{Inclusion-Exclusion Estimator}
\label{sec:inclusion-exclusion estimator}

The key insight in the inclusion-exclusion estimator \citep{daulton2020differentiable} is that the area of the union of two rectangles is the sum of their areas minus the area of their intersection, which is also a rectangle. In general, for hyperrectangles $\bigtimes_{i = 1}^m [a, f_i(\pi)]$, the hypervolume can be computed as follows. Let $2^S$ be the power set of $S \subseteq \Pi$. Then
\begin{align}
  \vol(S, f)
  = \sum_{C \in 2^S \setminus \emptyset} (2 \, (\abs{C} \, \mathrm{mod} \, 2) - 1)
  \prod_{i = 1}^m \left(\min_{\pi \in C} f_i(\pi) - a\right)\,,
  \label{eq:inclusion-exclusion hypervolume}
\end{align}
where $a \in \realset$ represents a reference point for the beginning of the coordinate system. The computation of \eqref{eq:inclusion-exclusion hypervolume} takes $O(2^{\abs{S}})$ time and therefore is inefficient even for relatively small $S$.

\subsection{Two Objectives}
\label{sec:two objectives}

For $m = 2$ objectives, algorithms with $O(\abs{S} \log \abs{S})$ computation time exist. More specifically, let $S = \set{\pi_k}_{k = 1}^K$ and suppose that $f_1(\pi_1) \leq \dots \leq f_1(\pi_K)$ holds, which can be done in $O(\abs{S} \log \abs{S})$ time by sorting $\pi_k$ according to the first objective \citep{preparata2012computational}. Then $f$ can be treated as a single-objective function, where $f_1(\pi_k)$ is its input and $f_2(\pi_k)$ is its output, and integrated along the first objective as
\begin{align*}
  \vol(S, f)
  = (f_1(\pi_1) - a) (\max_{k \in [K]} f_2(\pi_k) - a) +
  \sum_{k = 1}^{K - 1} (f_1(\pi_{k + 1}) - f_1(\pi_k))
  (\max_{\ell \in [K] \setminus [k]} f_2(\pi_\ell) - a)\,.
\end{align*}

\subsection{Random Hypervolume Scalarization}
\label{sec:random hypervolume scalarization}

Scalarization is a mapping $s_\lambda(f(\pi)): \realset^m \to \realset$ for a given $\lambda \in \realset^m$. The key idea in all scalarization methods is to reduce multiple objectives into a scalar and then optimize it. The most common scalarization techniques are linear $s_\lambda(f(\pi)) = \sum_{i = 1}^m \lambda_i f_i(\pi)$ and Chebyshev $s_\lambda(f(\pi)) = \min_{i \in [m]} \lambda_i (f_i(\pi) - a_i)$, where $a \in \realset^m$ is a reference point. Random hypervolume scalarization approximates the hypervolume indicator with random scalarizations chosen from an appropriate distribution. Specifically, \citet{zhang20random} showed that the hypervolume $\vol(S, f)$ can be rewritten as
\begin{align*}
  \vol(S, f)
  \propto \Erv{\lambda \sim B_m}{\max_{\pi \in S} s_{\lambda}(f(\pi) - a)}\,,
\end{align*}
where $s_{\lambda}(y) = \min_{i \in [m]} \max \set{0, y_i / \lambda_i}^m$, $B_m$ is a unit sphere in $\realset^m$, and vector $\lambda$ is drawn uniformly from $B_m$. The expectation is approximated by sampling $\lambda$.

\section{Additional Related Work}
\label{sec:additional related work}

In general multi-objective optimization, the decision maker must choose a candidate $x$ from a set of potential candidates $\cX$. For each $x \in \cX$, there are $m$ objective values $f(x) = (f_i(x))_{i = 1}^m$, where $f_i: \cX \to \realset$. Because the objectives can be traded off in many ways, many algorithms for MOO exist \citep{emmerich18tutorial}.

In the \emph{a-priori setting} \citep{branke2008multiobjective}, the utility of a decision maker is known in advance and used to find the optimal candidate. It is common to represent the utility function as belonging to a family of \emph{scalarizations} of the objectives, where the objectives are weighted separately and then combined. Arguably the most popular approach is linear scalarization $s_\lambda(f(x)) = \sum_{i = 1}^m \lambda_i f_i(x)$, where $\lambda \in \realset^m$ is a weight vector. In many real-world problems, $\lambda$ is unknown in advance. In such cases, it is natural to present potential candidates to the decision maker that approximate the Pareto front well. This is known as the \emph{a-posteriori setting} \citep{branke2008multiobjective} and many algorithms exist for it. One popular approach is to cover the Pareto front using random scalarization \citep{zhang20random}. This was done in ParEGO \citep{knowles2006parego} and an evolutionary algorithm MOEAD \citep{zhang2007moea}. Other evolutionary algorithms, such as NSGA-II \citep{deb02fast}, iteratively refine a population of candidates based on various fitness metrics. Unlike our approach, none of these methods provide guarantees on the quality of the approximation and additionally do not not handle uncertainty in objectives.

Regardless of the MOO method, the quality of the resulting solution needs to be measured. Intuitively, a good approximation contains a set of points that are close to the Pareto front and sufficiently diverse. Metrics that capture these two qualities are called performance indicators \citep{zitzler2008quality, audet2021performance}. Popular indicators are the Hausdorff distance from the approximation to the Pareto front, $R_2$, and hypervolume \citep{zitzler2000comparison}. The last has been increasingly popular and considered in several recent works \citep{zhang20random, auer16pareto}. As discussed in \cref{sec:hypervolume}, hypervolume can be challenging to compute.

In the \textit{online setting}, the decision maker interactively explores the Pareto front. \cite{drugan13designing} was the first work to apply bandits to MOO. They proposed a \ucb algorithm with a scalarized objective and also a Pareto \ucb algorithm. \cite{auer16pareto} formulated the problem of the Pareto front identification as best-arm identification where at each round, a point $x$ is chosen and a noisy observation of the objective $f(x)$ is observed. Thompson sampling in MOO was studied in \cite{yahyaa15thompson}.  A popular paradigm in MOO is to assume that the objective functions are drawn from a Gaussian process (GP) and several works have taken that approach. \cite{zuluaga2013active} is an early work with theoretical guarantees that has a similar observation model to \cite{auer16pareto}. Two recent works that applied GP bandits to MOO are \cite{paria19flexible} and \cite{zhang20random}. \cite{paria19flexible} minimizes the regret with respect to a known distribution of scalarization vectors. \cite{zhang20random} showed that this algorithm generates a set of points that maximize random hypervolume scalarization. All above works are in the online setting, where the learning agent can interactively probe the environment to learn about objective functions. Our setting is offline.

Arguably the two closest works are \citet{roijers17interactive} and \citet{wang22imo3}. \citet{roijers17interactive} treated online MOO as a two-stage problem, where the objective functions are estimated using initial interactions with the environment and the scalarization vector is then estimated by interacting with the designer. This approach was further refined by \citet{wang22imo3}, who used state-of-the-art off-policy estimation techniques to estimate the objectives and analyzed their approach. The key difference in our work is that we do not put any interaction burden on the policy designer, and simply present them a diverse set of policies to choose from.

\end{toappendix}

\end{document}